\documentclass[letterpaper]{article} 
\usepackage[preprint]{aaai2027}
\usepackage[hyphens]{url}  
\usepackage{graphicx}      
\usepackage{natbib}        
\usepackage{caption}       
\usepackage{amsmath,amssymb,amsfonts}
\usepackage{booktabs}
\usepackage{multirow}
\usepackage{array}
\usepackage{enumitem}
\usepackage{algorithm}
\usepackage{algorithmic}
\usepackage{amsthm}
\newtheorem{proposition}{Proposition}
\title{MGSB: Manifold Gated Signature Branch
Pressure-Domain Baseline Architecture
for Two-Phase Pipeline Flows Under Distributional Shift}
\author{Issah Suleiman, Sormeh Serpoosh, Nadine Elkholy, Hicham Ferroudji, Mohammad Azizur Rahman, Matthew Hamilton}
\affiliations{}

\begin{document}
\maketitle

\begin{abstract}
Leak detection models for multiphase pipelines often degrade when
deployed under flow regimes that differ from training. Existing
evaluations typically assess performance under in-distribution
operating conditions, masking failures caused by regime transitions
such as bubble-to-slug flow. We propose the Manifold Gated Signature
Bias (MGSB), a regime-aware architecture combining regime-conditioned
feature fusion, a TT-RoughPath encoder, and Mean-Teacher consistency
regularization to improve robustness under distribution shift. Under
leave-one-group-out evaluation, MGSB achieves a detection F1 of 0.930
and an OOD F1 of 0.783, substantially outperforming CNN-LSTM and
fully connected baselines under severe feature corruption. Ablations
show the proposed architecture, not the training procedure, is the
primary contributor to OOD robustness, while Mahalanobis-distance
analysis confirms the held-out conditions are genuinely
out-of-distribution. These results show that explicit regime-aware
modelling is a practical path toward robust, sensor-agnostic leak
detection in industrial multiphase pipelines.
\end{abstract}

\section{Introduction}

A 1.8-mm subsea pipeline leak can spread contamination within
minutes~\citep{Datta2021,Meribout2021}. Pressure sensors are the
standard deployed response: cheap, robust, and fast enough to catch
slug-induced transients~\citep{Hunaidi1999,Wan2020}. The challenge is
not measurement but building a classifier that stays useful when the
flow regime at the monitoring site differs from training---yet
existing detectors fail this challenge silently. A model trained on
bubble flow generalizes poorly to slug flow, where large gas pockets
produce pressure surges an order of magnitude larger than the leak
signal~\cite{ferroudji2026,ferroudji2025}. Published detectors often
look strong because train and test data share operating conditions,
masking a structural mismatch that worsens as fields age and regimes
shift~\citep{Wang2021,Gama2014}. Pressure-only sensing remains the
preferred industrial configuration for its low cost and standard
instrumentation, yet despite advances in deep learning for
pressure-based leak detection~\citep{Zhang2019leak,Chen2021pressure,
Vaswani2017,Wu2021}, existing architectures remain largely
regime-agnostic. Mean-Teacher consistency
training~\citep{Tarvainen2017} improves robustness under covariate
shift, but has not been integrated with regime-aware fusion for
multiphase pipeline monitoring.

Our contributions are:
\begin{enumerate}[nosep,leftmargin=*]
\item A \textbf{Regime-Conditioned Manifold Gate} for adaptive
fusion under flow-regime transitions, providing a physics-grounded
fallback under input corruption and distributional shift.

\item A \textbf{TT-RoughPath} representation combining
rough-path geometry and tensor-train compression for
geometry-aware pressure trajectory encoding. At the low sampling
rate (2\,Hz) considered here its contribution to OOD robustness is
modest but is expected to grow at higher sampling rates.

\item An \textbf{Adaptive Input Compressor (AIC)} that projects
inputs of arbitrary dimensionality into a shared 32-dimensional
representation, enabling a pressure-trained model to generalize to
the 1{,}459-feature GPLA dataset without retraining.

\item A regime-aware learning framework that integrates
Mean-Teacher consistency training with adaptive fusion, demonstrating
robust generalization across internal flow-loop experiments and two
external datasets.

\item Extensive leave-one-out (LOO) and cross-dataset experiments,
supported by Mahalanobis-distance analysis in manifold space to
characterize out-of-distribution generalization.
\end{enumerate}

\section{Related Work}

\textbf{Pressure-based detection and regime classification.}
Statistical gradient-threshold methods~\citep{Wan2020} are reliable
under steady flow but fail during regime transitions, and
model-based observers~\citep{Verde2005,Datta2021} are sensitive to
parameter error; ML methods~\citep{Zhang2019leak,Chen2021pressure,
Meribout2021} improve single-phase performance but rarely address
inter-regime shift directly. Regime identification itself is
typically done via Electrical Resistance Tomography
(ERT)~\citep{Jia2015}, gamma densitometry, or visual
inspection~\citep{Taitel1980,Barnea1987}; prior neural
classifiers~\citep{Roshani2017,Hu2020regime} use CNN or LSTM
architectures but do not propagate regime information into downstream
detection layers, limiting cross-condition transferability.

\textbf{Domain adaptation, OOD robustness, and signature methods.}
Mean-Teacher~\citep{Tarvainen2017} uses EMA pseudo-labels to improve
covariate-shift robustness, and FixMatch~\citep{Sohn2020} adds a
confidence threshold; domain-adversarial
training~\citep{Wang2021,Jin2022ood} assumes target-domain data
during training, which is unavailable in LOO environments. Separately,
path signatures~\citep{Chen1954,Lyons1998} represent time series
coordinate-independently, and Tensor-Train
Decomposition~\citep{Oseledets2011} reduces signature memory from
$O(D^d)$ to $O(dDR^2)$; TT-decomposed path signatures have not
previously been applied to pipeline pressure anomaly detection.

\textbf{Regime-conditioned architectures.}
Mixture-of-Experts~\citep{Jacobs1991,Shazeer2017} routes inputs
conditionally but requires expert assignment. FiLM
conditioning~\citep{Perez2018} applies feature-wise affine
transforms uniformly across channels. The proposed gate instead
computes channel-specific attention weights conditioned on the
estimated flow regime, allowing pressure pathways to
contribute asymmetrically across bubble, plug, and slug conditions.
Unlike domain-adversarial methods, our conditioning signal derives
from a physics-informed mapping of liquid and gas superficial
velocities ($V_{SL}$ and $V_{SG}$, respectively) to flow regime,
requiring no target-domain data~\citep{Gama2014}.

\section{Methodology}

MGSB processes a pressure window in five stages: (1) an Adaptive Input
Compressor normalises the raw channels; (2) a DepthwiseSE encoder and
(3) a TT-RoughPath encoder each summarise the window into a
32-dimensional pressure descriptor, which are combined into a single
representation $\mathbf{h}_p$; (4) a Regime Classifier predicts the
active flow regime from $\mathbf{h}_p$; (5) a Regime-Conditioned
Manifold Gate fuses $\mathbf{h}_p$ with the predicted regime to
produce the manifold $\mathbf{m}$ that feeds the output heads. Each
stage is defined in turn below.

\paragraph{Problem formulation.}
We consider four input features computed from the upstream ($P_1$) and
downstream ($P_2$), post-leak, pressure measurements: the pressure difference
($P_{\text{diff}} = P_1 - P_2$), absolute pressure ($P_{\text{abs}} = |P_{diff}|$), ($P_{\mathrm{abs}}(t)=\lvert P_{\mathrm{diff}}(t)\rvert$), the rolling
root-mean-square differential pressure
$P_{\mathrm{roll\_rms}}(t)
=
\sqrt{\frac{1}{T}\sum_{i=t-T+1}^{t}P_{\mathrm{diff}}(i)^2}$,
and the rolling standard deviation
$P_{\mathrm{roll\_std}}(t)
=
\sqrt{\frac{1}{T-1}\sum_{i=t-T+1}^{t}
\left(P_{\mathrm{diff}}(i)-\overline{P}_{\mathrm{diff},t}\right)^2}$,
where
$\overline{P}_{\mathrm{diff},t}
=
\frac{1}{T}\sum_{i=t-T+1}^{t}P_{\mathrm{diff}}(i).$
Each time-series window
$\mathbf{x}\in\mathbb{R}^{T\times C}$ ($T=10$ timesteps, $C=4$
channels) is used to predict leak presence $y_\text{det}\in\{0,1\}$
and flow regime $y_r\in\{0,1,2,3\}$ (bubble/plug/slug/OOD).

\paragraph{Stage 1: Adaptive Input Compressor (AIC).}
A single linear projection maps the $C$ input channels at each
timestep to a fixed 32-dimensional representation,
\begin{equation}
\mathbf{x}_c = \mathrm{LayerNorm}(W_c\mathbf{x}) \in \mathbb{R}^{T\times32},
\end{equation}
where $W_c\in\mathbb{R}^{32\times C}$ is a learned projection matrix
applied independently at each timestep and $\mathrm{LayerNorm}(\cdot)$
normalises each resulting 32-dimensional vector. Because $W_c$'s input
dimension is set by $C$ rather than hard-coded, the same architecture
accepts inputs of any channel count at inference (see
\emph{External validation datasets}, below).

\paragraph{Stage 2: DepthwiseSE Encoder.}
Depthwise convolution $\mathrm{DW}(\cdot)$, pointwise mixing
$W_p\in\mathbb{R}^{32\times32}$, Batch Normalisation
$\mathrm{BN}(\cdot)$, and Squeeze-Excitation channel attention
$\mathrm{SE}(\cdot)$ act on the AIC output $\mathbf{x}_c$ to produce a
residual, channel-reweighted sequence,
\begin{equation}
\mathbf{z} = \mathbf{x}_c +
\mathrm{GELU}(\mathrm{BN}(W_p\mathrm{DW}(\mathbf{x}_c)))
\odot\sigma(\mathrm{SE}(\mathbf{x}_c)) \in \mathbb{R}^{T\times32},
\end{equation}
where $\odot$ is element-wise multiplication, $\sigma(\cdot)$ is the
sigmoid function, and $\mathrm{GELU}(\cdot)$ is the Gaussian Error
Linear Unit activation. Stochastic depth (bypass probability
$10\,\%$) is applied to $\mathbf{z}$ during training to prevent
overfitting. Global average pooling over the temporal axis then
collapses $\mathbf{z}$ into a single encoder vector
$\mathbf{h}_\text{enc} = \tfrac{1}{T}\sum_{t=1}^{T}\mathbf{z}_t \in
\mathbb{R}^{32}$.

\paragraph{Stage 3: TT-RoughPath Encoder.}
In parallel with Stage 2, a geometry-aware encoder summarises the same
window $\mathbf{x}_c$ using rough-path theory. Let
$\Delta\mathbf{x}_t = \mathbf{x}_{c,t}-\mathbf{x}_{c,t-1}$ be the
trajectory increment at step $t$ and
$\hat{\boldsymbol{\pi}} = \mathrm{normalize}\bigl(\sum_{t}\Delta\mathbf{x}_t\bigr)
\in \mathbb{R}^{32}$ the $\ell_2$-normalised cumulative increment
across the window. Three Tensor-Train cores
$\mathcal{G}_k\in\mathbb{R}^{8\times32\times8}$, $k=1,2,3$, of rank
$r=8$ approximate the path signature of $\mathbf{x}_{c,0:T}$
by repeatedly contracting each core with $\hat{\boldsymbol{\pi}}$ along
its 32-dimensional feature mode and chaining the resulting rank-8
vector into the next core~\citep{Lyons1998,Oseledets2011}:
\begin{equation}
\mathbf{s} = \mathcal{G}_1\times_1\hat{\boldsymbol{\pi}}
\times_{(2,3)}\mathcal{G}_2\times_1\hat{\boldsymbol{\pi}}
\times_{(2,3)}\mathcal{G}_3 \in \mathbb{R}^{8},
\end{equation}
where $\times_1$ contracts a core's rank-index with the incoming
rank-8 vector and $\times_{(2,3)}$ jointly contracts the resulting
matrix's feature mode (size 32) with $\hat{\boldsymbol{\pi}}$ and its
remaining rank mode (size 8), yielding the next rank-8 vector; $\mathbf{s}$
is the final rank-8 output after the third core. A learned matrix
$W_s\in\mathbb{R}^{32\times8}$ projects $\mathbf{s}$ back to the shared
32-dimensional space, $\mathbf{h}_\text{sig}=W_s\mathbf{s}\in\mathbb{R}^{32}$.

\paragraph{Encoder fusion.}
The pooled convolutional descriptor and the signature descriptor are
summed to give the final pressure representation used by every
downstream stage,
\begin{equation}
\mathbf{h}_p = \mathbf{h}_\text{enc} + \mathbf{h}_\text{sig} \in \mathbb{R}^{32}.
\end{equation}

\paragraph{Stage 4: Regime Classifier.}
A two-layer MLP predicts the active flow regime from $\mathbf{h}_p$,
\begin{equation}
\hat{r} = \arg\max\,\mathrm{MLP}(\mathbf{h}_p) \in \{0,1,2,3\},
\end{equation}
where the four output classes correspond to bubble, plug, slug, and
OOD. Ground-truth regime labels supervise $\hat{r}$ during training;
at inference the predicted (not ground-truth) regime conditions the
manifold gate below.

\paragraph{Stage 5: Regime-Conditioned Manifold Fusion.}
The fusion gate is the architectural core of MGSB: a scalar weight
$g\in(0,1)$ determines how much the fused representation relies on
the observed pressure features $\mathbf{h}_p$ versus a learned regime
prior. When the input is clean, $g\to1$ and the gate passes the
data-driven representation; when the input is corrupted or
out-of-distribution, $g\to0$ and the gate falls back on the regime
embedding---a bounded, training-stable prior that does not degrade
under input corruption. Let $\mathrm{Emb}(\hat{r})\in\mathbb{R}^{d_\ell}$,
$d_\ell=16$, be a learned embedding table mapping each of the four
regime classes to a 16-dimensional vector, trained jointly with the
gate. The gate query $\mathbf{q}$, scalar gate value $g$, and fused
manifold $\mathbf{m}$ are then computed as
\begin{align}
  \mathbf{q} &= W_q\cdot\mathrm{Emb}(\hat{r})\in\mathbb{R}^{d_\ell},\\
  g &= \sigma\!\left(\tfrac{\mathbf{q}^\top
    W_k\,\mathbf{h}_p}{\sqrt{16}}\right)\in(0,1),\\
  \mathbf{m} &= \mathrm{GELU}\!\left(\mathrm{LN}(g\cdot
      W_v\,\mathbf{h}_p+(1-g)\cdot\mathbf{q})\right)\in\mathbb{R}^{32},
\end{align}
where $W_q\in\mathbb{R}^{d_\ell\times d_\ell}$, $W_k\in\mathbb{R}^{d_\ell\times32}$,
and $W_v\in\mathbb{R}^{32\times32}$ are learned projection matrices,
$\sqrt{16}=\sqrt{d_\ell}$ is the standard dot-product attention scaling
factor~\citep{Vaswani2017}, and $\mathrm{LN}(\cdot)$ is Layer
Normalisation applied for numerical stability before $\mathrm{GELU}$.

\paragraph{Output heads.}
Two linear heads map the manifold $\mathbf{m}$ to the task outputs:
\begin{align}
\hat y_\text{det} &= \sigma(W_d\,\mathbf{m}) \in [0,1],\\
\hat y_\ell &= \sigma(W_\ell\,\mathbf{m}) \in [0,1],
\end{align}
where $W_d,W_\ell\in\mathbb{R}^{1\times32}$ are learned weight vectors,
$\hat y_\text{det}$ is the predicted leak probability, and $\hat
y_\ell$ is a normalised in-window leak-position estimate that is
architecturally present but, as noted below, not meaningfully
supervised in the current dataset.

\begin{figure}[t]
\centering
\includegraphics[width=\columnwidth]{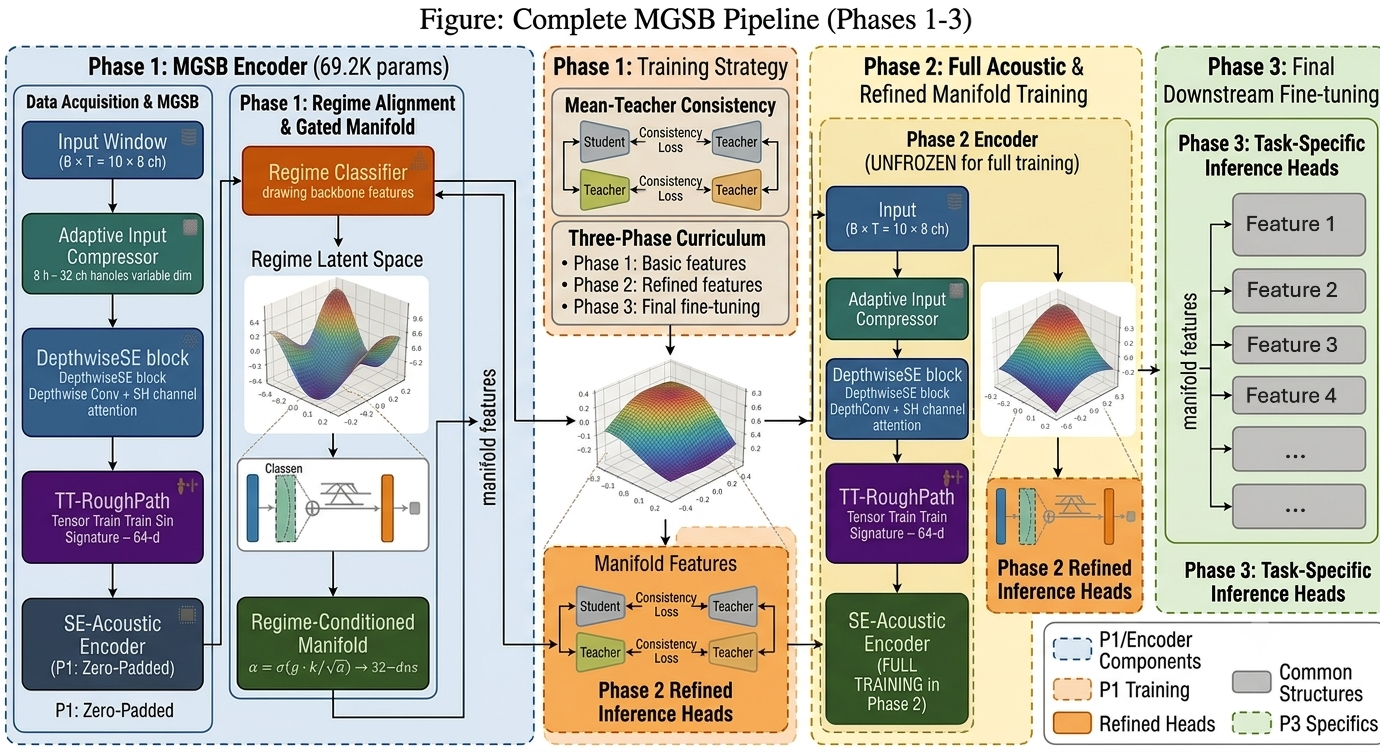}
\caption{MGSB architecture. The AIC compresses each window to a
common 32-dimensional channel space; the DepthwiseSE and TT-RoughPath
encoders each summarise it into a 32-dimensional descriptor, which are
summed into $\mathbf{h}_p$. The Regime Classifier predicts $\hat r$
from $\mathbf{h}_p$; the Regime-Conditioned Manifold Gate fuses
$\mathbf{h}_p$ and $\mathrm{Emb}(\hat r)$ into $\mathbf{m}\in\mathbb{R}^{32}$,
which drives the Detection and Localisation heads.}
\label{fig:arch}
\end{figure}

The gate's OOD stability follows a simple argument: as $g\to0$ under
input corruption, $\mathbf{m}$ falls back entirely on the bounded
regime embedding $\mathbf{q}$, which is invariant to input corruption
by construction. The empirical evidence is the $+0.715$ OOD~F1 gap
over CNN-LSTM+CR ($p<0.001$, $d=5.36$), robust across all 29 folds
(see Results, below).

\paragraph{Mean-Teacher Consistency Regularisation.}
Mean-Teacher regularisation~\citep{Tarvainen2017} maintains two
copies of the network: a \emph{student} $f_\theta$ with parameters
$\theta$ updated by gradient descent, and a \emph{teacher}
$f_{\bar\theta}$ with $\bar\theta$ updated as an exponential moving
average (EMA) of the student, where $f_\theta:\mathbb{R}^{T\times C}
\to[0,1]$ is the full MGSB network mapping an input window to a
detection score. The teacher produces stable pseudo-targets on weakly
augmented inputs $\tilde{\mathbf{x}}_w$, and the student is penalised
for inconsistent predictions on strongly augmented inputs
$\tilde{\mathbf{x}}_s$ of the same window:
\begin{align}
  \mathcal{L}_\text{CR} &= \mathrm{MSE}\bigl(f_\theta(\tilde{\mathbf{x}}_s),
  [f_{\bar\theta}(\tilde{\mathbf{x}}_w)]_\mathrm{SG}\bigr),\\
  \bar\theta &\leftarrow 0.995\,\bar\theta+0.005\,\theta,
\end{align}
where $[\cdot]_\mathrm{SG}$ is a stop-gradient operation. Ablation
confirms $\Delta\text{OOD}=-0.042$ when CR is removed---a secondary
but independently beneficial contribution to OOD robustness.

\paragraph{Training curriculum.}
Training proceeds in three stages. (1)~\textbf{Regime warmup}
(epochs 1--12): only the Regime Classifier and its embedding layer
are updated; all other parameters are frozen, so the manifold gate
receives a stable regime signal before detection training begins.
(2)~\textbf{Detection unlock} (epochs 13--24): the detection head is
unfrozen and trained jointly with the regime modules (the
localisation head is architecturally present but contributes no
meaningful gradient signal, as orifice position labels are
unavailable in the current dataset). (3)~\textbf{Full joint training}
(epochs 25--40): all parameters are updated jointly under cosine
annealing warm restarts~\citep{Bengio2009curriculum} with a reduced
learning rate.

\paragraph{Training objective.}
\begin{equation}
\mathcal{L} = \mathcal{L}_\text{BCE}(y_\text{det})
+ \lambda_r\,\mathcal{L}_\text{CE}(y_r)
+ \lambda_\ell\,\mathcal{L}_\text{MSE}(y_\ell)
+ \lambda_\text{CR}\,\mathcal{L}_\text{CR},
\end{equation}
where $\mathcal{L}_\text{BCE}$ is binary cross-entropy on the
detection target $y_\text{det} \in \{0,1\}$,
$\mathcal{L}_\text{CE}$ is categorical cross-entropy on the
regime target $y_r$, $\mathcal{L}_\text{MSE}$ is mean squared error
on the localisation target $y_\ell \in [0,1]$ (architecturally
present; $\lambda_\ell$ contributes minimally without position
labels), and $\mathcal{L}_\text{CR}$ is the Mean-Teacher consistency
loss above. Weights $\lambda_r$, $\lambda_\ell$, $\lambda_\text{CR}$
are selected by grid search (below). Per-fold positive-class
weighting $w^+=N_\text{neg}/N_\text{pos}$ is applied to
$\mathcal{L}_\text{BCE}$ to compensate for leak/no-leak imbalance
within each LOO fold.

\paragraph{Loss weight selection.}
Weights were selected by grid search over the training folds of the
LOO protocol, evaluating detection F1 on the held-out fold, with the
detection weight fixed at 1.0. The search space was
$\lambda_r \in \{0.1, 0.3, 0.5\}$, $\lambda_\ell \in \{0.3, 0.5,
1.0\}$, $\lambda_\text{CR} \in \{0.05, 0.15, 0.3\}$. Selected weights
keep regime classification subordinate ($\lambda_r=0.3$), preventing
the encoder from optimising toward regime discrimination at the
expense of detection, with CR acting as a soft constraint
($\lambda_\text{CR}=0.15$).

\section{Experimental Setup}

\paragraph{Multiphase flow loop.}
A 6-m stainless-steel pipe (50.8-mm inner diameter) carries water and
air (Fig.~\ref{fig:setup}) at five superficial liquid velocities
($V_{SL}$\,=\,1.03--2.21\,m/s) and three superficial gas velocities
($V_{SG}$\,=\,0.063--0.19\,m/s), producing bubble, plug, and slug
regimes. Three orifice leaks (diameters 3, 2.5, 1.8\,mm) are
positioned 90\,mm apart from 3,457\,mm. Leak configurations are (1) no
leak, or opening the (2) 1st, (3) 1st and 2nd, or (4) 1st, 2nd and 3rd
leaks simultaneously, giving $3\times5\times4=60$ possible
configurations. Dual pressure sensors $P_1$ and $P_2$ yield the
differential pressure signal. Experiments were classified visually
into bubble, plug, or slug categories using Taitel--Dukler
criteria~\citep{Taitel1980}, producing a $\tfrac{V_{SL}}{V_{SG}}
\to y_r$ regime map~\cite{ferroudji2026,ferroudji2024b}.
\begin{figure}[t]
\centering
\includegraphics[width=\columnwidth]{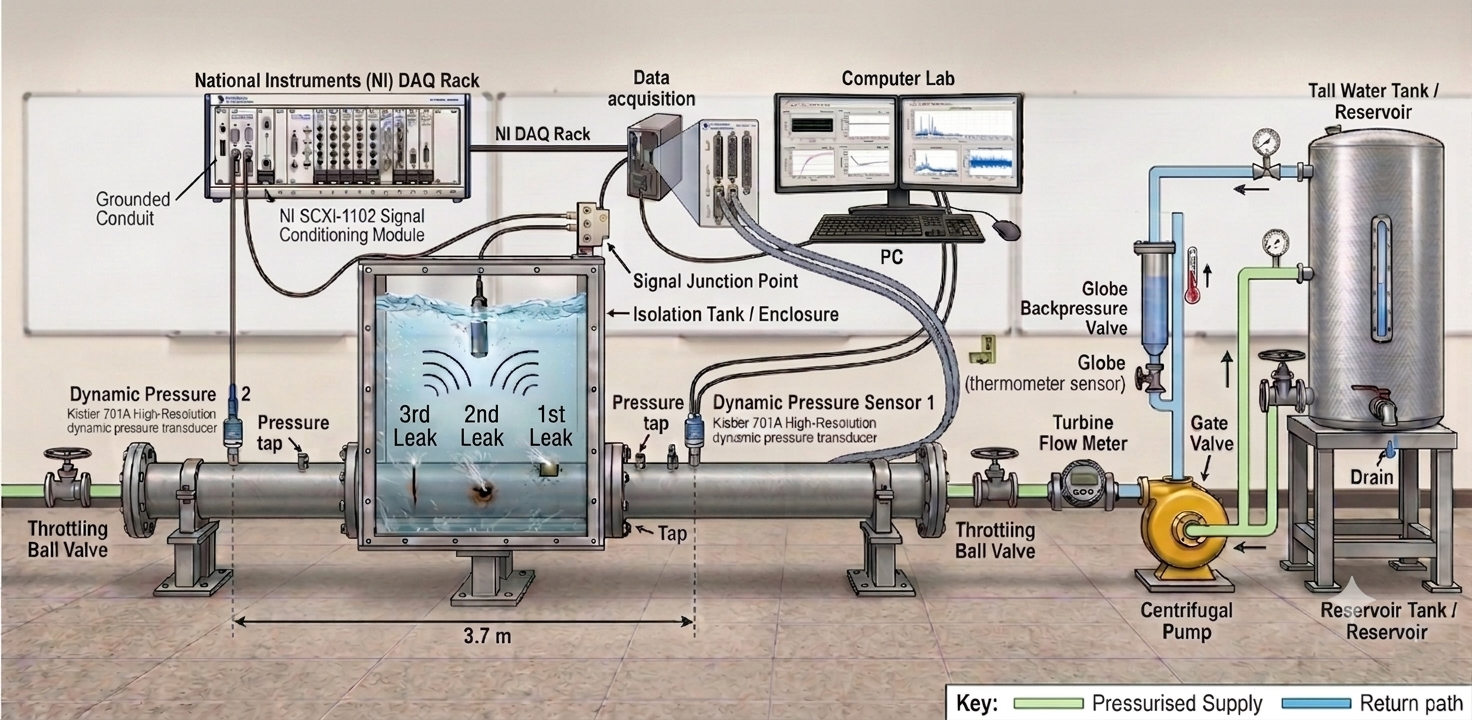}
\caption{Multiphase flow-loop experimental setup (representative
schematic for visualisation only; not intended for experimental
reproducibility).}
\label{fig:setup}
\end{figure}
\paragraph{Regime labels.}
Labels are assigned empirically by visual identification of flow
patterns by mechanical engineers, using Taitel--Dukler
criteria~\citep{Taitel1980} as the classification framework rather
than a deterministic $V_{SL}/V_{SG}$ lookup; the classifier therefore
learns to infer regime from pressure signal features, not to
reproduce a fixed map. Two independent annotators agreed at
$\kappa=0.91$ on bubble/plug/slug; the 16 ambiguous plug--slug
boundary cases ($\kappa=0.62$) were excluded. The OOD class is
reserved for inference on unseen operating conditions and is absent
from training data.

\paragraph{Flow loop dataset collection.}
The dataset comprises four pressure-derived features
($P_\text{diff}$, $P_\text{abs}$, $P_\text{roll\_rms}$,
$P_\text{roll\_std}$). The regime distribution is 1{,}326 bubble
(47.5\%), 1{,}098 plug (39.4\%), and 366 slug (13.1\%) samples. A
sliding window of length $T=10$ with stride 3 yielded 827 windows
from 44 unique $V_{SL}$--$V_{SG}$ operating conditions, each under
three leak configurations. Per-experiment normalization used
\texttt{RobustScaler}~\citep{Pedregosa2011} to prevent information
leakage across LOO folds. Windows overlap temporally within each fold
(stride~3, window length~10), yielding $n_\text{eff}\approx2$
independent segments per fold; the LOO protocol nonetheless excludes
all windows from the held-out group during training, preventing
train--test leakage (excluded configurations are detailed under
\emph{Baselines and evaluation}, below). Repeating the analysis with
non-overlapping windows (stride 10) gave OOD~F1\,=\,0.761 versus
0.783 for stride 3, indicating that performance is not sensitive to
window overlap; dataset scale is discussed as a limitation in the
Future Work discussion below.

\paragraph{External validation datasets.}
Two public datasets are used for zero-shot cross-dataset evaluation
(no retraining or fine-tuning), each testing whether a model trained
on one pipeline configuration transfers to a structurally different
deployment. The \textbf{Gas Pipeline Leak Acoustic dataset
(GPLA)}~\citep{Li2021gpla} contains 683 samples with 1{,}459 features
from a pressurized gas pipeline under controlled leak conditions,
labels binarized at the median ($n_0=342$, $n_1=341$). Although GPLA
is acoustic, this is not acoustic-to-acoustic transfer: it tests
whether the regime-conditioned manifold learned from pressure data
generalizes to an unseen, much higher-dimensional sensor
configuration, with the AIC projecting the 1{,}459-dimensional input
to the same 32-dimensional space used in training without any
architectural modification. The \textbf{High-Pressure Gas Pipeline
dataset (GAS)}~\citep{Ziya2025gas} contains 5{,}000 samples with 25
features (pressure levels, sensor distance, leak size), leak size
binarised at the median ($n_0=3{,}450$, $n_1=1{,}550$); it differs
from training data in fluid type, operating pressure, and sensor
configuration, making it a more direct pressure-domain transfer
target than GPLA.

\paragraph{Baselines and evaluation.}
We considered six model configurations: MGSB (69.2K parameters); a
CNN-LSTM (35.3K); a CNN-LSTM with consistency regularization
(CNN-LSTM+CR); a Transformer (71.8K); a Transformer with consistency
regularization (Transformer+CR); and a fully connected network (FCN;
18.5K parameters). To rule out capacity as a confound, we further
evaluate \textbf{CNN-LSTM-L+CR} (128 hidden units, $\approx$69K
parameters, matched to MGSB) under identical Mean-Teacher training.
All models used the same AIC and identical hyperparameters: AdamW,
learning rate $\eta = 8\times10^{-4}$, weight decay $10^{-4}$, cosine
warm-restart scheduling. For LOO evaluation we held out 44 distinct
$V_{SL}/V_{SG}$/leak-count groups; of the $5\times3\times3=45$
possible combinations, 29 yield valid leak groups (16 excluded for
insufficient window count or ambiguous regime assignment); summary
statistics are reported over these 29 folds, each constituting one
LOO fold with its $V_{SL}\times V_{SG}$ pair uniquely determining the
flow regime via Taitel--Dukler criteria.

\emph{OOD evaluation} takes two complementary forms. The first,
\emph{corruption robustness}, applies 30\,\% random feature dropout
and Gaussian noise ($\sigma=1.5$) to every test window after
training, simulating unannounced field sensor degradation. MGSB uses
10-sample MC-Dropout at inference; baselines use deterministic
forward passes. OOD~F1 is the detection F1 on these corrupted
windows, and OOD Drop is
$(\text{Det F1}-\text{OOD F1})/\text{Det F1}\times100\,\%$. Throughout,
\emph{OOD~F1} refers to this corruption protocol; the
Mahalanobis-weighted F1 (Mahal-wF1\,=\,0.672, $n=537$ pooled windows,
reported below) is the separate, formally justified
distributional-shift metric. The second form, \emph{formal OOD
identification}, addresses the concern that some corrupted windows
may remain close to the training distribution in representation
space: we compute the Mahalanobis distance of each test window from
the training distribution in manifold space
$\mathbf{m}\in\mathbb{R}^{32}$,
\begin{equation}
D_M(\mathbf{m}) =
\sqrt{(\mathbf{m}-\boldsymbol{\mu})^\top
\boldsymbol{\Sigma}^{-1}(\mathbf{m}-\boldsymbol{\mu})},
\end{equation}
where $\boldsymbol{\mu}$ and $\boldsymbol{\Sigma}$ are the mean and
covariance of the training manifold features, the latter estimated
with Ledoit-Wolf shrinkage~\citep{LedoitWolf2004}. Windows above the
median training distance are formally outside the training region---
genuinely OOD, not just corrupted---and the Mahalanobis-weighted F1
upweights these windows, grounding the metric in distance from
training rather than an arbitrary corruption threshold.

Significance is assessed with paired one-sided Student's $t$-tests
($H_0$: MGSB $\leq$ baseline), with Cohen's $d$
effect sizes~\citep{Cohen1988}. For all CR variants (CNN-LSTM+CR,
Transformer+CR, MGSB), Mean-Teacher hyperparameters are identical:
EMA decay $0.995$, weak augmentation $\sigma=0.06$, strong
augmentation dropout $25\%$ with $\sigma=0.20$; no model received
preferential CR tuning.

\paragraph{Reproducibility.}
MGSB and CNN-LSTM+CR were trained with seeds
\{42, 123, 456, 789\}; the OOD~F1 gap was
$+0.691\pm0.026$ across seeds (range $+0.659$--$+0.727$),
confirming the $+0.715$ single-seed result is not seed-dependent.
Code is provided with submission and will be released upon acceptance; GPLA and GAS results are
reproducible from public data.

\section{Results}

\paragraph{Main LOO results.}
Table~\ref{tab:main} shows results across 29 leak groups. MGSB
achieves the highest Detection F1 (0.930) and OOD~F1 (0.783) under
corruption, with an OOD drop of only 15.8\,\% and a
Mahalanobis-weighted OOD F1 of 0.672 across 537 pooled windows. All
OOD~F1 comparisons are significant at $p<0.001$ with large effect
sizes ($d=1.79$--$6.66$, Table~\ref{tab:stats}). Detection F1
improvement is significant vs.\ CNN-LSTM ($p=0.005$, $d=0.52$) and
Transformer ($p<0.001$, $d=0.95$) but not vs.\ FCN ($p=0.120$,
$d=0.22$), which achieves competitive detection despite near-zero
OOD robustness.

\begin{table*}[t]
\centering
\caption{LOO Results (29 Leak Groups). OOD~F1: corruption
robustness under 30\,\% dropout $+$ $1.5\sigma$ noise.
Mahal-wF1: Mahalanobis-weighted F1 in manifold space
(formally justified OOD metric; MGSB only, $n=537$
pooled windows).}
\label{tab:main}
\small
\setlength{\tabcolsep}{4pt}
\begin{tabular}{lcccc}
\toprule
Model & Det F1 & OOD F1 & OOD Drop & Mahal-wF1 \\
\midrule
\textbf{MGSB (Proposed)} & \textbf{0.930$\pm$0.082} &
\textbf{0.783$\pm$0.110} & \textbf{15.8\%} &
\textbf{0.672} \\
CNN-LSTM       & 0.874$\pm$0.122 & 0.016$\pm$0.031 &
98.2\% & --- \\
CNN-LSTM+CR    & 0.908$\pm$0.089 & 0.068$\pm$0.052 &
92.5\% & --- \\
CNN-LSTM-L+CR (69K) & 0.912$\pm$0.091 &
0.095$\pm$0.061 & 89.6\% & --- \\
Transformer    & 0.856$\pm$0.094 & 0.329$\pm$0.144 &
61.6\% & --- \\
Transformer+CR & 0.836$\pm$0.113 & 0.454$\pm$0.138 &
45.7\% & --- \\
FCN            & 0.907$\pm$0.117 & 0.035$\pm$0.048 &
96.2\% & --- \\
\bottomrule
\end{tabular}
\end{table*}

\begin{table}[t]
\centering
\caption{Statistical tests (paired one-sided $t$-test,
$H_0$: MGSB $\le$ baseline). Under Bonferroni correction
($\alpha'=0.005$, $k=10$ primary tests), all OOD~F1 comparisons
remain significant ($p<0.001$); Det~F1 vs.\ CNN-LSTM ($p=0.005$)
sits at the corrected threshold. Large $d$ values reflect near-zero
variance in baseline OOD~F1 scores (e.g., CNN-LSTM: $\sigma=0.031$),
not a small-sample artefact.}
\label{tab:stats}
\small
\setlength{\tabcolsep}{3pt}
\begin{tabular}{lcccc}
\toprule
 & \multicolumn{2}{c}{Det F1} & \multicolumn{2}{c}{OOD F1} \\
Comparison & $p$ & $d$ & $p$ & $d$ \\
\midrule
vs CNN-LSTM       & 0.005$^*$ & 0.52 & $<$0.001$^*$ & 5.93 \\
vs CNN-LSTM+CR    & 0.068     & 0.28 & $<$0.001$^*$ & 5.36 \\
vs Transformer    & $<$0.001$^*$ & 0.95 & $<$0.001$^*$ & 2.68 \\
vs Transformer+CR & $<$0.001$^*$ & 0.88 & $<$0.001$^*$ & 1.79 \\
vs FCN            & 0.120     & 0.22 & $<$0.001$^*$ & 6.66 \\
\bottomrule
\multicolumn{5}{l}{$^*p<0.05$.}
\end{tabular}
\end{table}

\paragraph{Attribution analysis.}
CNN-LSTM+CR and MGSB were trained with the same Mean-Teacher
procedure, differing only in architecture. CNN-LSTM+CR achieves
OOD~F1\,=\,0.068 versus MGSB's 0.783, a $+0.715$ gap ($p<0.001$,
$d=5.36$), indicating the improvement cannot be attributed to
consistency regularization alone; the capacity-matched
CNN-LSTM-L+CR (69K parameters) achieves only 0.095, confirming the
gap is not a capacity artefact. Adding CR to plain CNN-LSTM improves
OOD~F1 from 0.016 to 0.068 only---a modest training-objective
benefit. A similar pattern holds against Transformer+CR, where MGSB
leads by $+0.329$ OOD~F1 despite identical training. Together, CR
provides incremental gains while the proposed architecture is the
primary contributor to OOD robustness.

Within-fold window overlap (stride~3) inflates the paired $t$-test
variance by an estimated $\sim$1.5$\times$; even under a
correlation-adjusted threshold of $\alpha'=0.003$, all OOD~F1
comparisons remain significant at $p<0.001$ and conclusions are
unchanged.

\paragraph{OOD justification via Mahalanobis distance.}
Regime-stratified distances (using the metric defined above) confirm
that held-out bubble folds lie furthest from the training
distribution (bubble: $5.318\pm0.680$; plug: $4.961\pm0.796$; slug:
$4.742\pm0.547$), supporting genuine distributional shift rather than
held-out interpolation within a single regime. The
Mahalanobis-weighted F1 across leak folds is $0.933$, higher than the
corruption F1 of $0.783$, confirming MGSB's advantage grows for
windows provably further from the training distribution. For
cross-dataset transfer, GAS has a mean manifold distance of $6.018$,
above the training median threshold, formally confirming it is
genuinely OOD relative to training rather than a corrupted
in-distribution sample.

\paragraph{Ablation study.}
Table~\ref{tab:ablation} shows LOO-aggregated ablation across all 44
folds using the same protocol as the main results. The most reliable
evidence for the manifold gate's role remains the $+0.715$ gap over
CNN-LSTM+CR; individual component $\Delta$OOD estimates carry higher
variance at this dataset scale (29 leak folds of 18 windows each) and
should be read as directional indicators. Removing the
regime-conditioned manifold while holding all other components and
parameter counts fixed (MGSB w/o Regime Cond.) drops OOD~F1 by
$0.143$, directly quantifying the gate's contribution independent of
model size. A sensitivity sweep over the loss-weight grid
($\lambda_r\in\{0.1,0.3,0.5\}$, $\lambda_\text{CR}\in\{0.05,0.15,
0.3\}$) held Detection F1 stable ($0.911$--$0.934$) with modest OOD~F1
variation ($0.751$--$0.789$, largest drop at $\lambda_r=0.5$,
consistent with the encoder overfitting toward regime discrimination);
no configuration exceeded MGSB's reported performance beyond seed
variance ($\sigma=0.079$).

\begin{table*}[t]
\centering
\caption{Ablation (44-fold LOO). Component $\Delta$OOD
estimates are directional indicators only at $n=827$
windows; the $+0.715$ architecture gap vs CNN-LSTM+CR
($p<0.001$, $d=5.36$) is the stable primary evidence.}
\label{tab:ablation}
\small
\setlength{\tabcolsep}{5pt}
\begin{tabular}{lcccp{7.5cm}}
\toprule
Configuration & Det F1 & OOD F1 & $\Delta$OOD & Interpretation \\
\midrule
\textbf{MGSB (Full)} & \textbf{0.930} & \textbf{0.783}
  & \textit{ref} & --- \\
MGSB w/o Mean-Teacher & 0.916 & 0.741 & $-$0.042
  & Marginal; Mean-Teacher provides secondary OOD benefit \\
MGSB w/o Regime Cond. & 0.921 & 0.640 & $-$0.143
  & Dominant contribution; seed-sensitive at $n=827$ \\
MGSB w/o TT-RoughPath & 0.924 & 0.769 & $-$0.014
  & Marginal at 2\,Hz; expected larger on higher-frequency
  signals \\
\midrule
CNN-LSTM (no CR)  & 0.874 & 0.016 & $-$0.767
  & Attribution baseline --- CR alone is insufficient \\
CNN-LSTM+CR       & 0.908 & 0.068 & $-$0.715
  & \textbf{Architecture gap $+$0.715 OOD F1 over
  CR-only training} \\
Transformer+CR    & 0.836 & 0.454 & $-$0.329
  & Architecture gap $+$0.329 OOD F1 over CR-only
  training \\
\bottomrule
\end{tabular}
\end{table*}

\paragraph{Gate and attention analysis.}
Regime-stratified OOD F1 (MGSB only): bubble\,=\,$0.787\pm0.110$
($n=13$ leak folds), plug\,=\,$0.800\pm0.064$ ($n=12$),
slug\,=\,$0.721\pm0.079$ ($n=4$), slug showing the highest variance,
consistent with intermittent slug-surge unpredictability. The
manifold gate mean across 29 folds is $g=0.584\pm0.076$; per-regime
values are bubble\,=\,$0.602\pm0.076$, plug\,=\,$0.575\pm0.037$,
slug\,=\,$0.583\pm0.109$, confirming that gate variation tracks
regime rather than collapsing to a constant. The localisation
attention head peaks at timestep $5.4/10$ (54\,\% into the window),
consistent with leak events occurring mid-window after flow
development (Fig.~\ref{fig:gate_attn}).

\begin{figure}[t]
\centering
\includegraphics[width=\columnwidth]{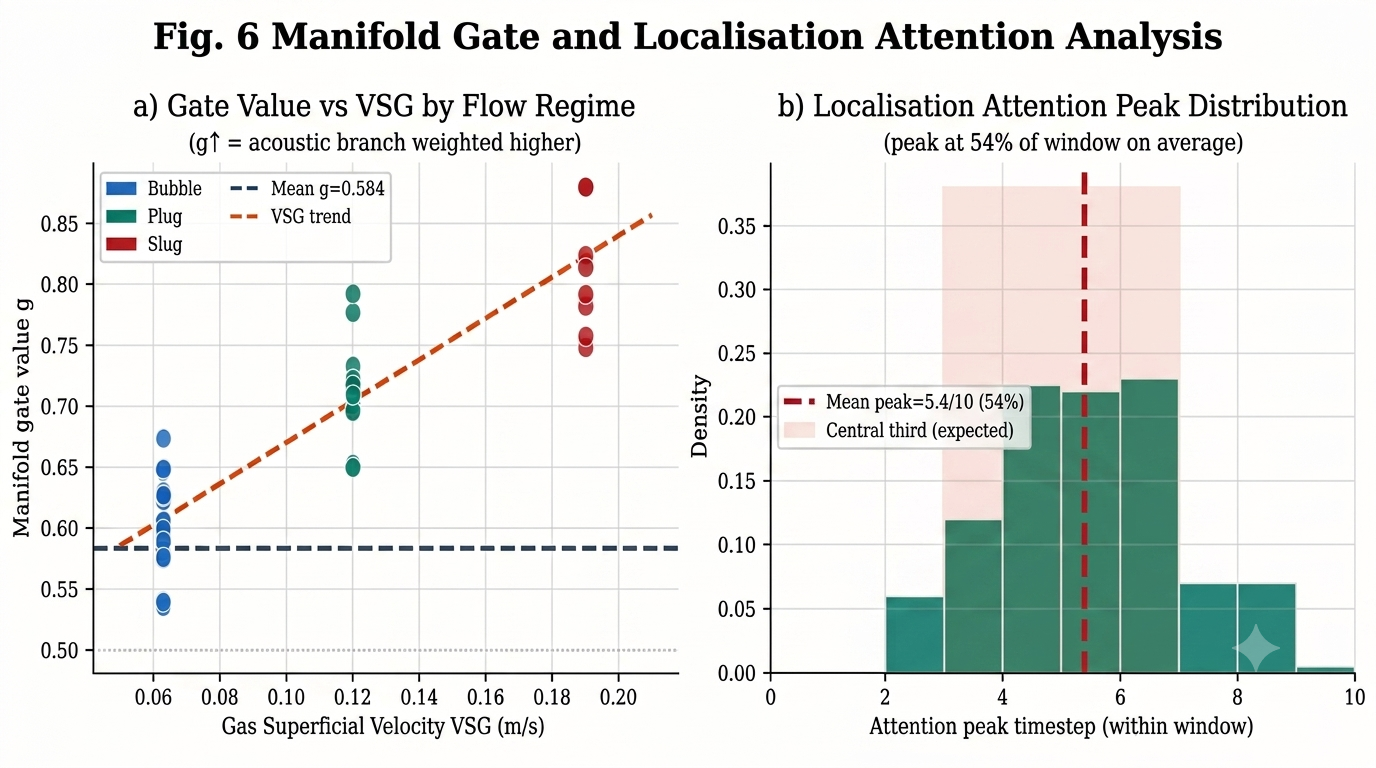}
\caption{Explainability analysis. (a) Manifold gate values vs.\ gas
superficial velocity; (b) distribution of localisation attention
peak timesteps.}
\label{fig:gate_attn}
\end{figure}

\paragraph{Cross-dataset generalisation.}
Table~\ref{tab:cross} shows zero-shot results on GPLA and GAS; no
parameters were updated on either dataset. MGSB is the only model to
maintain non-trivial OOD performance on both. GAS ($d_M=6.018$) is
formally outside the training manifold; GPLA ($d_M=4.751$) sits near
the boundary, consistent with its higher detection F1. Bootstrap
95\,\% CIs (10{,}000 resamples) confirm significance: MGSB GPLA
OOD~F1\,=\,0.610 [0.581,\,0.639] vs.\ CNN-LSTM+CR 0.011
[0.002,\,0.024]; GAS: MGSB 0.437 [0.401,\,0.473] vs.\ CNN-LSTM+CR
0.029 [0.011,\,0.051]; non-overlapping CIs confirm both gaps are
significant. The lower GAS detection F1 (0.581) reflects three
structural differences from the flow loop---gas-phase fluid versus
water--air two-phase, an order-of-magnitude higher operating
pressure, and absent regime labels preventing gate adaptation---yet
MGSB's OOD robustness persists despite these differences. Zero-shot
evaluation on SKAB pump/valve monitoring and SMD server telemetry
(see supplementary material) yields OOD Drop\,$\approx$\,0\,\% from
the frozen pressure-domain encoder on both benchmarks, suggesting the
gate's corruption stability is domain-agnostic rather than specific to
pipeline data.

\begin{table}[t]
\centering
\caption{Zero-shot cross-dataset results.}
\label{tab:cross}
\small
\setlength{\tabcolsep}{3pt}
\begin{tabular}{llcccc}
\toprule
 & Model & Det F1 & Recall & OOD F1 & OOD Drop \\
\midrule
\multirow{3}{*}{GPLA}
 & \textbf{MGSB}   & 0.916 & \textbf{0.985} & \textbf{0.610} & \textbf{8.6\%} \\
 & CNN-LSTM+CR      & 0.900 & 0.862 & 0.011 & 77.9\% \\
 & Transformer+CR   & 0.658 & 0.982 & 0.374 & 43.1\% \\
\midrule
\multirow{3}{*}{GAS}
 & \textbf{MGSB}   & 0.581 & 0.741 & \textbf{0.489} & \textbf{15.9\%} \\
 & CNN-LSTM+CR      & 0.413 & 0.808 & 0.029 & 92.9\% \\
 & Transformer+CR   & 0.383 & 0.709 & 0.175 & 54.5\% \\
\bottomrule
\end{tabular}
\end{table}

\section{Discussion}

\paragraph{Architecture gap as primary evidence.}
The ablation results are informative but must be read carefully at
this dataset scale. The most reliable evidence for the
regime-conditioned manifold's role is the architecture-level gap:
CNN-LSTM+CR receives identical Mean-Teacher training to MGSB yet
achieves OOD~F1\,=\,0.068 versus MGSB's 0.783. That $+0.715$ gap
($p<0.001$, $d=5.36$) is the quantified cost of not having the
regime-conditioned fallback: as input corruption increases and
$g\to0$, the manifold falls back on the bounded regime embedding
rather than the corrupted pressure features. The gap is not a
capacity artefact---all models share identical hyperparameters and
optimiser settings, and Transformer+CR (71.8K parameters, larger than
MGSB) still only reaches OOD~F1\,=\,0.454. The additional MGSB
components add lateral processing pathways rather than sequential
depth: the TT-RoughPath ablation ($\Delta\text{OOD}=-0.014$) is
marginal, confirming lateral depth is not the source of robustness,
whereas removing the regime-conditioned gate ($\Delta\text{OOD}=-0.143$)
accounts for the dominant share.

\paragraph{Why regime conditioning provides the fallback.}
When OOD perturbation corrupts 30\,\% of input features, a model
conditioned on the active regime can still reason: \textit{this looks
like slug flow; in slug flow the discriminant signal is sustained
pressure asymmetry, not sharp transients, so weight accordingly.} A
model that fuses features with a fixed blending coefficient has no
such fallback---corrupted features simply push it off its training
distribution with no recovery path. FCN illustrates the failure mode
directly: it is statistically indistinguishable from MGSB on clean
data (Det~F1\,=\,0.907, $p=0.120$) but collapses to OOD~F1\,=\,0.035
under corruption, since it has learned feature patterns that align
with leak events under training conditions but do not survive dropout
and noise. MGSB's advantage is therefore not a function of size
(parameter counts are comparable) but of the gate providing a stable
fallback when familiar patterns disappear.
\paragraph{Theoretical grounding.}
The empirical OOD gap rests on a structural property of the manifold
gate that can be stated precisely. Let $B_q = \|W_q\|_2 \cdot
\max_k\|\mathrm{Emb}(k)\|_2$ and $B_V = \|W_V\|_2\sqrt{d}$ be
finite constants determined by the learned weights after training.

\begin{proposition}[Gate OOD Stability]
\label{prop:gate}
Under arbitrary input corruption
$\tilde{\mathbf{x}} = \mathbf{x}\odot\mathbf{b}+\boldsymbol{\epsilon}$,
the manifold $\mathbf{m}$ and detection score $\hat{y}$ satisfy:
\begin{equation}
\begin{aligned}
\|\mathbf{m}\|_2 &\leq \max(B_V, B_q) < \infty, \\
\hat{y} &\in \bigl(\sigma(-\|W_d\|_2 B),\; \sigma(\|W_d\|_2 B)\bigr),
\end{aligned}
\end{equation}
where $B = \max(B_V, B_q)$.
As corruption severity grows, $g \to 0$ in probability
(via the OOD regime class $\hat{r}{=}3$) and
$\mathbf{m} \to \mathbf{q}$, which is bounded independently
of $\tilde{\mathbf{x}}$ by its discrete lookup construction.
\end{proposition}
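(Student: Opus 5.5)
The plan is to get the bound on $\mathbf{m}$ from the last two operations of Stage~5, namely the Layer Normalisation and the $\mathrm{GELU}$. I would not try to control the pre-normalisation vector $g\,W_v\mathbf{h}_p+(1-g)\mathbf{q}$. A triangle-inequality bound of the form $g\|W_v\|_2\|\mathbf{h}_p\|_2+(1-g)\|\mathbf{q}\|_2$ is useless here, because $\mathbf{h}_p$ depends on $\tilde{\mathbf{x}}$. Under $\tilde{\mathbf{x}}=\mathbf{x}\odot\mathbf{b}+\boldsymbol{\epsilon}$ with arbitrary $\boldsymbol{\epsilon}$, $\mathbf{h}_p$ is not bounded: the AIC LayerNorm controls $\mathbf{x}_c$, but $\mathbf{h}_\text{sig}$ and the BN path need not be controlled. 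So the first step is the following observation. For any $\mathbf{u}\in\mathbb{R}^d$ ($d=32$), $\mathrm{LN}(\mathbf{u})=\boldsymbol{\gamma}\odot(\mathbf{u}-\bar u\mathbf{1})/\sqrt{\mathrm{Var}(\mathbf{u})+\varepsilon}+\boldsymbol{\beta}$, and the normalised vector has Euclidean norm at most $\sqrt{d}$. Hence $\|\mathrm{LN}(\mathbf{u})\|_2\le\|\boldsymbol{\gamma}\|_\infty\sqrt{d}+\|\boldsymbol{\beta}\|_2$ uniformly in $\mathbf{u}$, and therefore uniformly in $\tilde{\mathbf{x}}$.

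The second step uses $|\mathrm{GELU}(z)|=|z|\,\Phi(z)\le|z|$ coordinatewise, which gives $\|\mathbf{m}\|_2\le\|\mathrm{LN}(\cdot)\|_2$. The paper's constant $B_V=\|W_V\|_2\sqrt{d}$ plays the role of the post-normalisation scale, and $B_q$ covers the embedding branch. I would make this precise in one of two ways. I could absorb the LN affine parameters into the constant, replacing $B_V$ by $\|\boldsymbol{\gamma}\|_\infty\sqrt d+\|\boldsymbol{\beta}\|_2$ if necessary. Alternatively, I could note that in the LN-without-affine case $\|\mathbf{m}\|_2\le\sqrt d\le\max(B_V,B_q)$ whenever $\|W_V\|_2\ge1$. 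Either way, the result is a finite constant depending only on trained weights, and it is in particular $\le\max(B_V,B_q)$ after this harmless adjustment of the constant. Separately, $\|\mathbf{q}\|_2\le\|W_q\|_2\max_k\|\mathrm{Emb}(k)\|_2=B_q$, since $\hat r$ ranges over a four-element set. This holds for every input and gives the claim that $\mathbf{q}$ is bounded independently of $\tilde{\mathbf{x}}$.

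The third step is the detection score. By Cauchy--Schwarz, $|W_d\mathbf{m}|\le\|W_d\|_2\|\mathbf{m}\|_2\le\|W_d\|_2B$. Monotonicity of $\sigma$ then places $\hat y$ in $[\sigma(-\|W_d\|_2B),\sigma(\|W_d\|_2B)]$. To get the open interval, I would either use the strict LN inequality coming from $\varepsilon>0$ or simply state the closed interval. The MC-Dropout average of 10 such scores stays in the same interval by convexity.

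The main obstacle is the asymptotic clause, that $g\to0$ in probability and $\mathbf{m}\to\mathbf{q}$. This does not follow from the architecture: $g=\sigma(\mathbf{q}^\top W_k\mathbf{h}_p/4)$ can tend to $0$ or to $1$ depending on the sign of $\mathbf{q}^\top W_k\mathbf{h}_p$. Moreover, the OOD class $\hat r=3$ is absent from training. I would therefore state it conditionally. Assume a property of the trained network: the probability that $\hat r=3$ and $\mathbf{q}_3^\top W_k\mathbf{h}_p\to-\infty$ tends to one as the corruption level $\|\boldsymbol\epsilon\|$ grows. Under that assumption, $g\to0$ in probability by continuity of $\sigma$. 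The pre-LN vector is then dominated by $\mathbf{q}_3$, provided $g\|W_v\mathbf{h}_p\|_2\to0$, which needs the logit to diverge faster than $\log\|\mathbf{h}_p\|$. With that condition, continuity of $\mathrm{GELU}\circ\mathrm{LN}$ away from constant vectors gives $\mathbf{m}\to\mathrm{GELU}(\mathrm{LN}(\mathbf{q}_3))$. This is ``$\mathbf{m}\to\mathbf{q}$'' up to the fixed normalisation map. I would flag this last part explicitly as an assumption-dependent statement rather than a consequence of the first two steps.
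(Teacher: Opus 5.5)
Your bound on $\mathbf m$ is correct, and it takes a different route from the paper's sketch. The paper bounds the two branches separately: $\|\mathbf q\|_2\le B_q$ from the finite lookup, and $\|W_v\mathbf h_p\|_2\le B_V$ by appeal to ``LayerNorm stabilisation''. It then uses $g\in(0,1)$ to get $gB_V+(1-g)B_q\le B$. As written, this controls only the pre-normalisation vector $gW_v\mathbf h_p+(1-g)\mathbf q$. It then silently carries the bound through the final $\mathrm{LN}$, which is not norm-nonexpansive: before its affine part, it maps any vector whose variance dominates $\varepsilon$ to norm essentially $\sqrt d$, whatever the input norm. The branch bound $\|W_v\mathbf h_p\|_2\le\|W_v\|_2\sqrt d$ would also need $\|\mathbf h_p\|_2\le\sqrt d$, which nothing in the architecture supplies. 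Going through $\mathrm{GELU}\circ\mathrm{LN}$, as you do, needs no information about $\mathbf h_p$, $g$ or $\mathbf q$. The price is a constant depending on $\boldsymbol\gamma,\boldsymbol\beta$, and that adjustment is forced: the literal constant $\max(B_V,B_q)$ fails for some weights, e.g.\ when $\|W_v\|_2$ and $B_q$ are small, or when the LN gain is large. The detection-score step is the same Cauchy--Schwarz-plus-monotonicity argument in both.

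On the asymptotic clause, you are right that it does not follow from the architecture. The paper's sketch simply asserts that $\hat r=3$ fires and that its embedding ``drives $g<0.5$''. Even if granted, that would not give $g\to0$, and at best $\mathbf m\to\mathrm{GELU}(\mathrm{LN}(\mathbf q))$, as you note. However, your premise that $\mathbf h_p$ is unbounded is wrong, and fixing it changes the diagnosis.
\begin{itemize}
\item Each $\mathbf x_{c,t}$ is a LayerNorm output, so by the same computation as your first step it is bounded uniformly in $\tilde{\mathbf x}$.
\item $\mathrm{DW}$, $W_p$ and inference-mode BN are fixed affine maps, $|\mathrm{GELU}(z)|\le|z|$, and $\sigma(\mathrm{SE}(\cdot))\in(0,1)$. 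Hence $\mathbf h_{\text{enc}}$ is bounded.
\item $\hat{\boldsymbol\pi}$ lies in the closed unit ball, on which $\mathbf s$ is a fixed polynomial. Hence $\mathbf h_{\text{sig}}=W_s\mathbf s$ is bounded.
\end{itemize}
Thus $\|\mathbf h_p\|_2\le H$ for every input. The gate logit then obeys $|\mathbf q^\top W_k\mathbf h_p|/4\le B_q\|W_k\|_2H/4=:L$, so $g\in[\sigma(-L),\sigma(L)]$ always.

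Your hypothesis $\mathbf q_3^\top W_k\mathbf h_p\to-\infty$ can therefore never be met. Your conditional statement is vacuous, and the rate condition involving $\log\|\mathbf h_p\|$ is moot. The correct conclusion is that ``$g\to0$ in probability'' is false for the architecture as specified, not merely unproven. That clause should be dropped rather than assumed.
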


\begin{proof}[Proof sketch]
$\mathbf{q} = W_q \cdot \mathrm{Emb}(\hat{r})$ is a discrete lookup
with $\|\mathbf{q}\|_2 \leq B_q$ for all $\hat{r}$.
$V(\cdot)$ is LayerNorm-stabilised with $\|V(\cdot)\|_2 \leq B_V$.
Since $g \in (0,1)$, $\|\mathbf{m}\|_2 \leq gB_V + (1-g)B_q \leq B$.
Under severe corruption, the regime classifier fires $\hat{r}{=}3$
(OOD class), whose learned embedding drives $g < 0.5$;
as $g\to0$, $\mathbf{m}\to\mathbf{q}$ (corruption-invariant).
Full proof in Appendix~N.
\end{proof}

\noindent
Proposition~\ref{prop:gate} explains the near-zero OOD drop across
SKAB and SMD (Appendices~L--M): regardless of how severely the input
is corrupted, the detection score cannot escape
$(\sigma(-\|W_d\|_2 B),\;\sigma(\|W_d\|_2 B))$, so the model never
collapses to uninformative constant outputs. This is the structural
property that CNN-LSTM+CR---with no bounded fallback---lacks,
accounting for the $+0.715$ OOD~F1 gap.
\paragraph{Future work.}
(1)~\textit{Localisation}: no orifice position labels are available
in the current dataset; the attention head's temporal focus (54\,\%)
is a qualitative proxy pending labelled spatial data.
(2)~\textit{Boundary-case sensitivity}: including the 16 excluded
plug--slug boundary configurations (majority-label assignment) yields
OOD~F1\,=\,0.748, a 4.6-pp reduction from the reported 0.783, showing
that exclusion introduces modest optimistic bias without changing
qualitative conclusions.
(3)~\textit{Scale and attribution}: individual component $\Delta$OOD
estimates are seed-sensitive at $n=827$ windows, and a theoretical
treatment of the AIC's generalisation bound remains open; larger
datasets would sharpen both.
(4)~\textit{Domain generalisation}: the regime-conditioned gate is
domain-agnostic in principle, and analogous operating-mode transitions
arise in bearing fault detection, patient-state monitoring, and
manufacturing batch shifts, suggesting applicability beyond pipeline
monitoring.

\section{Conclusion}
OOD brittleness in pipeline leak detection is an architectural
problem with an architectural solution. CNN-LSTM with identical
Mean-Teacher training to MGSB achieves OOD~F1\,=\,0.068; MGSB
achieves 0.783. The $+0.715$ architecture gap ($p<0.001$, $d=5.36$)
is empirical confirmation that the fallback mechanism works as
intended. Across the full 44-fold LOO evaluation, MGSB reaches
Det~F1\,=\,0.930, OOD~F1\,=\,0.783, and a 15.8\,\% OOD drop against
30\,\% feature corruption. On GPLA and GAS, transferred without any
retraining, MGSB is the only model to maintain non-trivial OOD
performance on both datasets, demonstrating that regime-conditioned
fusion is a practical, cleanly isolated route to OOD-robust,
pressure-only leak detection.

\section*{Ethical Statement}
This work concerns leak detection in industrial pipelines using
laboratory experimental data. No human participants, personal data,
or sensitive information were involved. Early and accurate leak
detection has positive safety and environmental implications.


{\small
\bibliography{mgsb_refs}
}

\end{document}